\documentclass{article}

\usepackage[preprint]{neurips_2026}

\usepackage[utf8]{inputenc}
\usepackage[T1]{fontenc}
\usepackage{hyperref}
\usepackage{url}
\usepackage{booktabs}
\usepackage{amsfonts}
\usepackage{nicefrac}
\usepackage{microtype}
\usepackage{xcolor}
\usepackage{amsmath, amssymb}
\usepackage{graphicx}

\usepackage{multirow}
\usepackage{algorithm}
\usepackage{algpseudocode}
\usepackage{amsthm}
\newtheorem{lemma}{Lemma}
\newtheorem{theorem}{Theorem}
\newtheorem{corollary}{Corollary}
\usepackage{array}
\newcolumntype{C}[1]{>{\centering\arraybackslash}p{#1}}

\title{Factorization-Error-Free Discrete Diffusion Language Model via Speculative Decoding}

\author{%
  Xun Fang \\
  East China Normal University \\
  \texttt{51264404020@stu.ecnu.edu.cn} \\
  \And
  Yunchen Li \\
  East China Normal University \\
  \texttt{52284404001@stu.ecnu.edu.cn} \\
  \And
  Hang Yuan \\
  East China Normal University \\
  Beijing Zhongguancun Academy \\
  \texttt{52274404018@stu.ecnu.edu.cn} \\
  \And
  Zhou Yu\thanks{Corresponding author.} \\
  East China Normal University \\
  \texttt{zyu@stat.ecnu.edu.cn}
}

\begin{document}
\renewcommand{\thefootnote}{}
\footnotetext{The authors contribute equally to the paper and are listed in alphabetical order.}
\maketitle

\begin{abstract}
Discrete diffusion language models improve generation efficiency through parallel token prediction, but standard $X_0$ prediction methods introduce factorization errors by approximating the clean token posterior with independent token-wise distributions. This paper proposes Factorization-Error-Free Discrete Diffusion Language Modeling (FeF-DLLM), which replaces independent clean-token prediction with an exact prefix-conditioned factorization of the clean posterior to better preserve token dependencies. To reduce the sequential cost introduced by prefix conditioning, FeF-DLLM further incorporates speculative decoding within diffusion denoising, accelerating inference while maintaining the parallel prediction and re-masking properties of DLLMs. Theoretically, we prove that FeF-DLLM generates from the true joint distribution and derive its expected acceleration ratio. Experiments on GSM8K, MATH, HumanEval, and MBPP demonstrate that our method improves accuracy by an average of 5.04 percentage points while achieving an average inference speedup of $3.86\times$.
\end{abstract}

\section{Introduction}

Diffusion models \citep{ho2020denoising,lipmanflow,songscore,songdenoising} have become one of the most successful classes of generative models in recent years, achieving strong performance in image generation \citep{rombach2022high,peebles2023scalable,ma2024sit}, video generation \citep{ho2022video,bar2024lumiere}, and many other domains \citep{wu2024seesr,yim2024diffusion,wang2025diffusion}. 
Recently, several works have extended diffusion modeling to discrete spaces~\citep{austin2021structured,lou2024discrete,gat2024discrete,sahoo2024simple} and applied discrete diffusion language models (DLLMs) to large language modeling~\citep{nie2025llada,ye2025dream,bie2025llada20scalingdiffusionlanguage}. 
These models have shown competitive performance with autoregressive models while enabling a different generation paradigm based on iterative denoising and parallel token prediction.

During generation, DLLMs usually predict multiple tokens in parallel and combine these token-wise predictions to form the final output. 
Although this design improves generation efficiency, it introduces a factorization error because the joint distribution over clean tokens is approximated by a product of independent token distributions. 
Recent works have attempted to address this issue from different perspectives. 
ReDi~\citep{yooredi} mitigates factorization error through a rectified-flow formulation and newly constructed paired training data, but it does not fully remove the error. 
Other methods propose improved sampling procedures~\citep{liudiscrete,lavenant2025error}, often at the cost of significantly slower generation. 
Some approaches integrate DLLMs with speculative decoding~\citep{campbell2025self,cheng2025deerdraftdiffusionverify}; however, such designs introduce autoregressive decoding behavior into DLLM inference, thereby undermining the distinctive non-autoregressive reasoning characteristics of DLLMs.

In this paper, we propose \emph{Factorization-Error-Free Discrete Diffusion Language Modeling} (FeF-DLLM). 
Built upon the classical $X_0$-prediction framework of DLLMs, FeF-DLLM analyzes the exact decomposition of the clean posterior distribution and derives a factorization-error-free generation objective. 
Instead of independently predicting each clean token, our method uses prefix-conditioned prediction to preserve dependencies among clean tokens. 
To improve inference efficiency, we further incorporate speculative decoding, which amortizes the sequential dependency introduced by prefix conditioning and accelerates generation while retaining the parallel prediction and re-masking properties of DLLMs.

We theoretically show that FeF-DLLM can generate samples from the true conditional joint distribution when the position-conditioned target model is well specified, and we analyze the expected speedup brought by speculative decoding. We evaluate FeF-DLLM on standard mathematical reasoning and code generation benchmarks, including GSM8K, MATH, HumanEval, and MBPP. Experimental results show that FeF-DLLM improves accuracy by an average of 5.04 percentage points and achieves an average inference speedup of $3.86\times$, demonstrating its effectiveness in improving generation quality while substantially accelerating inference.

Our contributions are summarized as follows:
\begin{itemize}
    \item We analyze the factorization error in existing DLLMs and show that it can be eliminated through an exact prefix-conditioned factorization of the clean posterior.

    \item We use speculative decoding to accelerate the resulting prefix-conditioned inference procedure and derive its expected speedup.

    \item We conduct extensive experiments to evaluate FeF-DLLM. Experimental results demonstrate that FeF-DLLM consistently improves generation quality over the baseline while simultaneously achieving substantial wall-clock acceleration.
\end{itemize}

\section{Preliminary}
\subsection{Discrete Diffusion Language Models}

Discrete diffusion language models define a diffusion process directly over discrete tokens. Building on the D3PM framework \citep{austin2021structured}, we define the forward diffusion process as a fixed Markov chain $q(X_{1:T}\mid X_0)=\prod_{t=1}^{T}q(X_t\mid X_{t-1})$, in which each transition kernel is parameterized by a categorical transition matrix \(Q_t\in\mathbb{R}^{S\times S}\), where \(S\) denotes the vocabulary size. For a one-hot token \(X_{t-1}\), the one-step corruption process is
\begin{equation*}
q(X_t\mid X_{t-1})=\mathrm{Cat}(X_t;\,p=X_{t-1}Q_t),
\end{equation*}

where $\mathrm{Cat}(x; p)$ denotes a categorical distribution over the one-hot row vector $x$, with probabilities given by row vector $p$. Let \(\bar Q_t=Q_1Q_2\cdots Q_t\). The marginal at timestep \(t\) is \(q(X_t\mid X_0)=\mathrm{Cat}(X_t;\,p=X_0\bar Q_t)\), enabling noisy tokens to be sampled directly from the clean input. For sequence data, corruption is typically applied independently across token positions. Common choices of \(Q_t\) include uniform transitions, nearest-neighbor transitions in embedding space, and absorbing-state transitions that gradually replace tokens with a special \([\mathrm{MASK}]\) token.

The forward posterior has the following closed form:
\begin{equation*}
q(X_{t-1}\mid X_t,X_0)=\mathrm{Cat}\left(X_{t-1};p=\frac{X_tQ_t^\top \odot X_0\bar Q_{t-1}}{X_0\bar Q_t X_t^\top}\right).
\end{equation*}
The generative process is a learned reverse Markov chain \(p_\theta(X_{0:T})=p(X_T)\prod_{t=1}^{T}p_\theta(X_{t-1}\mid X_t)\). In the \(X_0\)-prediction parameterization, the model first predicts the clean token from the corrupted input, denoted as \(\hat{X}_0=f_\theta(X_t,t)\). Then the reverse transition is constructed by plugging this prediction into the analytic forward posterior:
\begin{equation}
p_\theta(X_{t-1}\mid X_t) = q(X_{t-1}\mid X_t,\hat{X}_0).
\label{eq:forward}
\end{equation}

Equivalently, the model uses the predicted clean token to determine how to denoise \(X_t\) by one step. The corresponding clean-token prediction objective is
\begin{equation}
\mathcal{L}=\mathbb{E}_{q({X}_0)}\mathbb{E}_{t}\mathbb{E}_{q(X_t\mid X_0)}\left[-\sum_{i=1}^{N}\log p_\theta(X_0^{i}\mid X_t,t)\right].
\label{eq:loss function}
\end{equation}

\subsection{Speculative Decoding}

Speculative decoding \citep{leviathan2023fast,chen2023accelerating} accelerates autoregressive inference by using a smaller and faster approximation model to propose multiple candidate tokens, which are then verified in parallel by the target model. Let $M_{\pi}$ denote the target model, and let $ \pi(X^i \mid X^{<i})$ be its next-token distribution given the prefix $X^{<i}$. We further introduce an efficient approximation model $M_{\rho}$, whose next-token distribution is denoted by $\rho(X_t \mid X^{<i})$. Speculative decoding proceeds as follows. First, the approximation model $M_{\rho}$ autoregressively generates $\gamma$ candidate tokens, denoted by $X^1, \ldots, X^{\gamma}$. Then, the target model $M_{\pi}$ evaluates the corresponding candidate prefixes in parallel and obtains the target distributions at each position. For the $i$-th proposed token $X^i$, the algorithm accepts it with probability
\[
    \min\left(1, \frac{\pi_i(X^i)}{\rho_i(X^i)}\right),
\]
where $\pi_i$ and $\rho_i$ denote the target and approximation distributions under the corresponding prefix, respectively. If a proposed token is rejected, the algorithm resamples from the corrected distribution
\[
    \pi'(X)=\operatorname{norm}\left(\max\left(0, \pi(X) - \rho(X)\right)\right).
\]
This correction ensures that the resulting sample is still exactly distributed according to the target distribution $\pi$. Therefore, speculative decoding produces samples from the same distribution as direct decoding from $M_{\pi}$. Benefiting from the parallel verification of multiple draft tokens, speculative decoding can substantially improve generation speed.

\subsection{Setting and Notation}

We consider a \(d\)-dimensional discrete random variable \(X=(X^1,\dots,X^d)^\top\) defined on \(\mathcal{V}^d\). Subscripts are used to index decoding steps in diffusion language model inference, while superscripts are used to index token positions in a sequence. For example, $X_t^i$ denotes the random token at position $i$ when the diffusion decoder is at step $t$. We write $X^{<i}$ and $X^{>i}$ to denote the subsequence of tokens before and after position $i$. In the context of speculative decoding, we use $\pi$ to denote the target distribution and $\rho$ to denote the draft distribution. \(\odot\) is denoted as element-wise multiplication. 


\section{Methodology}
\label{sec:methodology}
\subsection{Analysis of Factorization Error}

We first revisit the commonly used $X_0$-prediction parameterization in discrete diffusion language models. 
Given a corrupted sequence $X_t$, the reverse transition in Eq.~\ref{eq:forward} is constructed from an estimate of the clean-data posterior $p(X_0\mid X_t)$. 
In principle, this posterior is a distribution over the full discrete sequence space $\mathcal{V}^d$, whose cardinality grows exponentially with the sequence length $d$. 
Directly modeling such a joint distribution is computationally intractable for neural language models. 
Therefore, existing DLLMs usually adopt a token-wise prediction objective, in which the model predicts each clean token independently conditioned on the same corrupted input:
\begin{equation}
    p(X_0 \mid X_t) 
    \approx 
    \prod_{i=1}^d p(X_0^i \mid X_t).
    \label{eq:mean-field-factorization}
\end{equation}

However, Eq.~\ref{eq:mean-field-factorization} relies on an independence assumption across dimensions. Such an assumption is generally incompatible with natural language, where tokens exhibit strong syntactic and semantic dependencies. The exact decomposition of the clean posterior follows the chain rule:
\begin{equation*}
    p(X_0 \mid X_t)
    =
    \prod_{i=1}^d
    p(X_0^i \mid X_t, X_0^{<i}).
    \label{eq:autoregressive-factorization}
\end{equation*}
Compared with Eq.~\ref{eq:mean-field-factorization}, this factorization preserves dependencies among clean tokens by conditioning each position on the previously recovered clean prefix $X_0^{<i}$. 
The discrepancy between the token-wise approximation and the exact factorization is therefore characterized by the missing dependence on $X_0^{<i}$. To make this discrepancy explicit, we derive the following identity.

\begin{lemma}
\label{lemma:predict}
Assume that the forward corruption process factorizes across token positions, 
i.e., $q(X_t\mid X_0)=\prod_{j=1}^{d}q(X_t^j\mid X_0^j)$. 
For any position $i$, the autoregressive clean-token posterior satisfies
\begin{equation*}
\begin{split}
    p(X_0^i \mid X_t, X_0^{<i})
    &=p(X_0^i \mid X_t^{\ge i}, X_0^{<i})\\
    &\propto
    p(X_0^i \mid X_t)
    \frac{p(X_0^i \mid X_t^{>i}, X_0^{<i})}
    {p(X_0^i \mid X_t^{-i})},
\end{split}
\label{eq:conditional-identity}
\end{equation*}
where $X_t^{-i}=(X_t^{<i},X_t^{>i})$, and the proportionality is over $X_0^i$.
\end{lemma}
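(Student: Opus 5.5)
The identity has two parts, and I would prove them in order using only Bayes' rule and the per-position factorization $q(X_t\mid X_0)=\prod_j q(X_t^j\mid X_0^j)$. Throughout, $p$ is the joint law $p(X_0)q(X_t\mid X_0)$, and all proportionalities are in $X_0^i$ with everything else held fixed.

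\emph{Step 1 (the equality).} Write $p(X_0^i\mid X_t,X_0^{<i})\propto p(X_0^{\le i},X_t)$ and expand by marginalizing over $X_0^{>i}$:
\begin{equation*}
p(X_0^{\le i},X_t)=\sum_{x_0^{>i}} p(X_0^{\le i},x_0^{>i})\prod_{j<i}q(X_t^j\mid X_0^j)\prod_{j\ge i}q(X_t^j\mid x_0^j),
\end{equation*}
where for $j=i$ the factor is $q(X_t^i\mid X_0^i)$ and does not involve the summation variable. The factor $\prod_{j<i}q(X_t^j\mid X_0^j)$ depends only on the conditioned prefix and the observed $X_t^{<i}$, not on $X_0^i$, so it pulls out of the sum as a constant. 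What remains is proportional to $p(X_0^{\le i},X_t^{\ge i})$, and hence to $p(X_0^i\mid X_t^{\ge i},X_0^{<i})$. In words, given the clean prefix, the noisy prefix is conditionally independent of everything else.

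\emph{Step 2 (the proportionality).} From Step 1, apply Bayes on $X_t^i$:
\begin{equation*}
p(X_0^i\mid X_t^{\ge i},X_0^{<i})\propto q(X_t^i\mid X_0^i)\,p(X_0^i\mid X_t^{>i},X_0^{<i}),
\end{equation*}
using that $X_t^i$ depends only on $X_0^i$ given $(X_0^{\le i},X_t^{>i})$; this again follows from the factorization after summing over $X_0^{>i}$. Next, express $q(X_t^i\mid X_0^i)$ through the two model marginals. By the same conditional-independence argument, $p(X_0^i\mid X_t)\propto q(X_t^i\mid X_0^i)\,p(X_0^i\mid X_t^{-i})$. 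This gives $q(X_t^i\mid X_0^i)\propto p(X_0^i\mid X_t)/p(X_0^i\mid X_t^{-i})$ as a function of $X_0^i$. Substituting yields the stated form.

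\emph{Main obstacle.} The substance is correctly justifying each conditional independence, namely $X_t^i\perp (X_t^{-i},X_0^{-i})\mid X_0^i$ and $X_t^{<i}\perp (X_0^{\ge i},X_t^{\ge i})\mid X_0^{<i}$. I would derive both directly from the product form of $q$ by writing out the joint and noting which factors depend on which variables. A technical caveat is division by $p(X_0^i\mid X_t^{-i})$: I would restrict to values with positive probability, or equivalently read the identity on the support, with both sides vanishing elsewhere since $q(X_t^i\mid X_0^i)=0$ there forces $p(X_0^i\mid X_t)=0$.
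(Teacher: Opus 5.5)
Your proposal is correct and follows essentially the paper's argument. Both proofs use the factorized kernel to split off the single-site likelihood $q(X_t^i\mid X_0^i)$ and cancel it between the prefix-conditioned posterior and $p(X_0^i\mid X_t)$, and both get the equality from the conditional independence of $X_t^{<i}$ and $(X_0^{\ge i},X_t^{\ge i})$ given $X_0^{<i}$. Your explicit marginalization and your division of the two one-site Bayes updates are, if anything, clearer than the paper's final ``rewrite the likelihood ratio'' step, which hides two Bayes inversions and a cancellation of $p(X_0^i)$.

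One small correction to your caveat: at a value $x$ with $p(X_0^i=x\mid X_t^{-i})=0$, what does the work is $p(X_0^i=x,X_t^{-i})=0$, not $q(X_t^i\mid x)=0$. Since $p(X_0^i=x,X_t)=q(X_t^i\mid x)\,p(X_0^i=x,X_t^{-i})=0$, both $p(X_0^i=x\mid X_t)$ and the left-hand side vanish there, so the right-hand side is a $0/0$ to be read as $0$.
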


Lemma~\ref{lemma:predict} shows that the desired autoregressive posterior $p(X_0^i\mid X_t,X_0^{<i})$ is equivalent to conditioning on
$(X_t^{\ge i},X_0^{<i})$, and differs from the independent predictor
$p(X_0^i\mid X_t)$ by the normalized correction term
$p(X_0^i\mid X_t^{>i},X_0^{<i})/p(X_0^i\mid X_t^{-i})$.
This term captures the additional dependence on the clean prefix and the
remaining corrupted context. Therefore, predicting the $i$-th clean token should
account not only for the corrupted sequence $X_t$, but also for previously
determined clean tokens; ignoring this dependence leads to factorization error in
the learned reverse process.

Motivated by this observation, we replace the independent clean-token predictor with a prefix-conditioned predictor. 
Instead of estimating $p_\theta(X_0^i\mid X_t,t)$ independently for each position, we train the model to approximate
\[
    p_\theta(X_0^i\mid X_t^{\ge i}, X_0^{<i}, t).
\]
 
This parameterization preserves the left-to-right dependency structure of the clean sequence while still leveraging the bidirectional corrupted context available in diffusion inference. The resulting training objective is
\begin{equation}
    \mathcal{L}
    =
    \mathbb{E}_{q(X_0)}
    \mathbb{E}_{t}
    \mathbb{E}_{q(X_t\mid X_0)}
    \left[
        -\sum_{i=1}^{d}
        \log p_\theta(X_0^{i}\mid X_t^{\ge i}, X_0^{<i}, t)
    \right].
    \label{eq:modified loss function}
\end{equation}
This objective has the same supervised clean-token prediction form as Eq.~\ref{eq:loss function}. Consequently, it can be optimized by finetuning existing DLLM backbones with modified input, without changing the underlying discrete diffusion forward process. The corresponding training procedure is summarized in Algorithm~\ref{alg:position_conditioned_training} in Appendix~\ref{sec:supp_algorithm}.

\subsection{Acceleration via Speculative Decoding}
\label{sec:speculative-diffusion}

The prefix-conditioned predictor introduced in Eq.~\ref{eq:modified loss function} enables inference from the correct joint distribution of clean tokens, thereby eliminating the factorization error induced by the independence assumption, but it also changes the inference pattern of DLLMs. 
In standard DLLM inference, all clean-token predictions can be produced in parallel from the corrupted sequence $X_t$. 
In contrast, $p_\theta(X_0^i\mid X_t^{\ge i}, \hat X_0^{<i}, t)$ depends on the previously recovered clean prefix $\hat X_0^{<i}$. 
Therefore, directly sampling from this distribution requires left-to-right decoding within each denoising step, which may reduce the parallel efficiency of diffusion-based generation.

To mitigate this sequential bottleneck, we adopt speculative decoding within each diffusion denoising step. 
The key idea is to use a fast draft model to propose multiple clean tokens in parallel, and then verify these proposals with the prefix-conditioned target model. The complete inference algorithm is provided in Algorithm~\ref{alg:fef_dllm_full_inference} in Appendix~\ref{sec:supp_algorithm}.
Suppose that the first $m$ positions have already been determined, denoted by $\hat X_0^{<m}$. 
We consider a speculative window of length $k$, covering positions $m+1,\ldots,m+k$. The draft model defines a distribution
\begin{equation*}
    \rho_\phi^i(X_0^i)
    :=
    \rho_\phi(X_0^i \mid \hat X_0^{<m}, X_t^{>m}, t),
    \qquad i=m+1,\ldots,m+k.
\end{equation*}
Unlike the target model, the draft model conditions only on the already verified prefix $\hat X_0^{<m}$ and the remaining corrupted context $X_t^{>m}$. 
Thus, the draft distributions for all positions in the speculative window can be computed in parallel. 
We then sample draft tokens $\tilde X_0^{m+1},\ldots,\tilde X_0^{m+k}\sim\prod_{i=m+1}^{m+k}\rho_\phi^i(\cdot).$ After the draft tokens are generated, the target model verifies them from left to right. For each position $i=m+1,\ldots,m+k$, the target distribution is evaluated using the prefix formed by the already accepted tokens and the preceding draft tokens:
\begin{equation*}
    \pi_\theta^i(X_0^i):=p_\theta(X_0^i \mid X_t^{\ge i}, \tilde X_0^{<i}, t).
\end{equation*}
The proposed token $\tilde X_0^i$ is accepted with probability
\begin{equation*}
    \min\left(1,\frac{\pi_\theta^i(\tilde X_0^i)}{\rho_\phi^i(\tilde X_0^i)}\right).
\end{equation*}

If $\tilde X_0^i$ is accepted, we set $\hat X_0^i=\tilde X_0^i$ and continue to verify the next position. 
If it is rejected, we resample $\hat X_0^i$ from the corrected distribution
\begin{equation*}
    \pi_\theta^{\prime i}(x)
    =
    \operatorname{norm}
    \left(
        \max(0,\pi_\theta^i(x)-\rho_\phi^i(x))
    \right),
    \label{eq:speculative-correction}
\end{equation*}
and terminate the current speculative window. 
The next speculative step then starts from the updated verified prefix. 
This procedure is repeated until all positions in the block have been inferred, yielding a complete prediction $\hat X_0$. Once $\hat X_0$ is obtained, we construct the reverse transition following the standard $X_0$-prediction parameterization:
\begin{equation*}
    p_\theta(X_{t-1}\mid X_t)=q(X_{t-1}\mid X_t,\hat X_0),
\end{equation*}
as in Eq.~\ref{eq:forward}. 
Therefore, our method modifies only the clean-token prediction stage, while leaving the discrete forward process and the analytic posterior transition unchanged.

The choice of the draft model is flexible. 
In principle, any model that approximates the target conditional distribution and supports efficient sampling can be used as $\rho_\phi$. 
In practice, we use a DLLM-style draft model because its token predictions can be computed in parallel within a speculative window, resulting in an $O(1)$ drafting cost with respect to the window length. 
Moreover, using a draft model with the same architecture family as the target model often yields higher acceptance rates, since the draft distribution is better aligned with the prefix-conditioned target distribution. The complete inference algorithm is provided in Algorithm~\ref{alg:fef_dllm_full_inference} in Appendix~\ref{sec:supp_algorithm}. The overall training and inference pipeline of FeF-DLLM is illustrated in Figure~\ref{fig:algorithm}.

\begin{figure}[t]
    \centering
    \includegraphics[width=\linewidth]{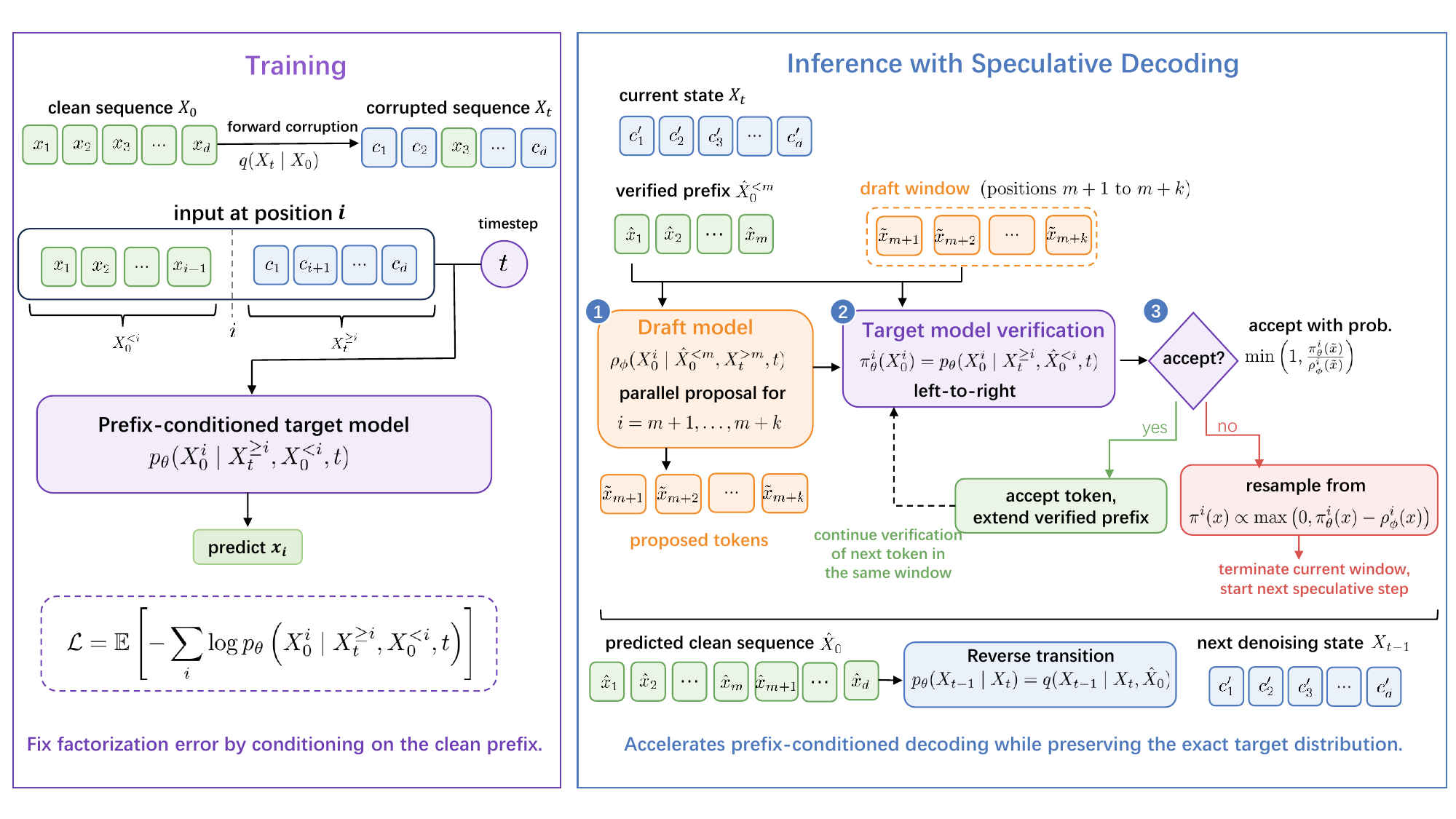}
    \caption{Overview of FeF-DLLM. Prefix-conditioned training predicts each clean token from the clean prefix $X_0^{<i}$ and the remaining corrupted context $X_t^{\ge i}$. During inference, a draft model proposes a speculative window in parallel, and a prefix-conditioned target model verifies the proposals left to right before constructing the standard reverse transition.}
    \label{fig:algorithm}
\end{figure}

\subsection{Theoretical Properties}

We establish two theoretical properties of FeF-DLLM. The first result shows that the proposed prefix-conditioned generation rule is distributionally exact whenever the target model matches the true position-conditioned posterior. The second result characterizes the expected progress of speculative verification under a standard independent-acceptance approximation.

\paragraph{Distributional exactness.}
Fix a denoising state $X_t$ and let
\[
J_t=\{j_1,\ldots,j_{L_t}\}, \qquad j_1<\cdots<j_{L_t},
\]
be the ordered set of positions to be updated at timestep $t$. Let $p^\star$ denote the oracle data law. We assume the target law $\pi_\theta$ induced by $p_\theta$ matches the true position-conditioned posterior under $p^\star$. For every $m\in\{1,\ldots,L_t\}$, we take the target model to be the oracle next-position law
\begin{equation}
\pi_\theta^{j_m}(x)
=
p^\star\!\left(
X_0^{j_m}=x
\,\middle|\,
X_t^{\ge j_m}, X_0^{<j_m}, t
\right).
\label{eq:oracle target}
\end{equation}

\begin{theorem}[Exact joint law]
Under Eq.~\ref{eq:oracle target}, the sequence produced by FeF-DLLM satisfies, for any $x_{J_t}\in\mathcal{V}^{L_t}$,
\[
\Pr\!\left(
\hat X_{0,J_t}=x_{J_t}
\,\middle|\,
X_t,t
\right)
=
\prod_{m=1}^{L_t}
p^\star\!\left(
X_0^{j_m}=x^{j_m}
\,\middle|\,
X_t^{\ge j_m},x_0^{<j_m},t
\right).
\]
Equivalently,
\[
\Pr\!\left(
\hat X_{0,J_t}=x_{J_t}
\,\middle|\,
X_t,t
\right)
=
p^\star(x_{J_t}\mid X_t,t).
\]
\label{theorem:distribution}
\end{theorem}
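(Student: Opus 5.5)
The proof has two parts. First, speculative verification restricted to a single position yields an exact draw from that position's target law, conditional on everything already fixed. Second, these per-position draws chain together through the chain rule and Lemma~\ref{lemma:predict}. We take the speculative procedure as generating the coordinates $\hat X_0^{j_1},\ldots,\hat X_0^{j_{L_t}}$ one position at a time. Each coordinate is either produced at a verification step (accept, or reject and resample) or carried over from the previously verified prefix. We want the joint law as a product of one-step conditionals, so we track the filtration $\mathcal F_m$ generated by $X_t$ and all randomness used up to the moment $\hat X_0^{j_{m}}$ is finalized.

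\textbf{Key one-step claim.} Condition on $\mathcal F_{m-1}$, which includes the event that the current window began at some earlier position and that all preceding drafts in the window were accepted. Then the finalized value $\hat X_0^{j_m}$ has law $\pi_\theta^{j_m}(\cdot)$ evaluated at the realized prefix $\hat X_0^{<j_m}$. The argument is the standard speculative-sampling identity. The draft token $\tilde X_0^{j_m}$ has law $\rho^{j_m}$, so
\[
\Pr\bigl(\hat X_0^{j_m}=x\bigr)=\rho(x)\min\Bigl(1,\tfrac{\pi(x)}{\rho(x)}\Bigr)+\Bigl(1-\textstyle\sum_y\min(\rho(y),\pi(y))\Bigr)\frac{\max(0,\pi(x)-\rho(x))}{\sum_y\max(0,\pi(y)-\rho(y))},
\]
and this simplifies to $\min(\rho,\pi)+\max(0,\pi-\rho)=\pi(x)$.

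Two subtleties need care here, and I expect them to be the main obstacle.

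\emph{(i) The draft law must not depend on the filtration in a problematic way.} The draft tokens in a window are sampled jointly, but independently across positions, given the window start. Because of this, $\tilde X_0^{j_m}$ is independent of the acceptance coins and of the earlier drafts $\tilde X_0^{j_{m'}}$ for $m'<m$. So conditioning on their acceptance leaves its law equal to $\rho^{j_m}$.

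\emph{(ii) The target prefix must be the accepted one.} In the paper the target is evaluated on the prefix $\tilde X_0^{<i}$ formed by draft tokens. On the event that we are still verifying position $j_m$, every earlier draft in the window was accepted, so $\tilde X_0^{<j_m}=\hat X_0^{<j_m}$. The target therefore conditions on the finalized prefix, as needed.

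Positions that are not updated (those outside $J_t$) are held fixed by $X_t$ or are already clean. The ``new window'' case, after a rejection, is the same computation with a fresh draft.

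With the one-step claim in hand, we iterate conditional expectations (a telescoping over $m$) and use the oracle assumption Eq.~\ref{eq:oracle target}. This gives
\[
\Pr(\hat X_{0,J_t}=x_{J_t}\mid X_t,t)=\prod_{m=1}^{L_t}p^\star\bigl(X_0^{j_m}=x^{j_m}\mid X_t^{\ge j_m},x_0^{<j_m},t\bigr),
\]
which is the first display of the theorem. For the equivalent form, Lemma~\ref{lemma:predict} applies under the factorized forward process and gives $p^\star(X_0^{j_m}\mid X_t^{\ge j_m},x_0^{<j_m})=p^\star(X_0^{j_m}\mid X_t,x_0^{<j_m})$. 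Conditioning on $x_0^{<j_m}$ in place of $x_{j_1},\ldots,x_{j_{m-1}}$ is legitimate because positions not in $J_t$ are determined by $X_t$ or earlier steps. The product is then the chain-rule factorization of $p^\star(x_{J_t}\mid X_t,t)$.
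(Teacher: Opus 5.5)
Your proposal is correct and follows the same route as the paper's proof. Both use the one-step accept/reject identity $\min\{\pi,\rho\}+[\pi-\rho]_+=\pi$ to show that each committed token has law $\pi_\theta^{j_m}$ given the committed prefix, chain this along $j_1<\cdots<j_{L_t}$, and finish with the left-to-right chain rule for $p^\star$. Your appeals to Lemma~\ref{lemma:predict} and to the positions outside $J_t$ being fixed by $X_t$ make explicit what the paper's final equality leaves implicit, and your points (i) and (ii) are details the paper glosses over. One small fix: define $\mathcal F_{m-1}$ so that it contains only the drafts at $j_1,\dots,j_{m-1}$, not the whole window's draft sample (which is drawn before $\hat X_0^{j_{m-1}}$ is finalized); otherwise the one-step claim, as literally stated, would condition on $\tilde X_0^{j_m}$ itself.
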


\begin{corollary}[Conditional correctness under resampling]
Consider any resampling pass $s\ge 1$, and let
\[
R^{(s)}=\{r_1^{(s)},\ldots,r_{L_s}^{(s)}\},\qquad
r_1^{(s)}<\cdots<r_{L_s}^{(s)},
\]
be the positions selected for resampling. Conditional on the previous sequence $\hat X_0^{(s-1)}$ and the selected set $R^{(s)}$, assume Eq.~\ref{eq:oracle target} holds on $R^{(s)}$, then for any $x_{R^{(s)}}\in\mathcal{V}^{L_s}$,
\[
\Pr\!\left(
\hat X_{0,R^{(s)}}^{(s)}=x_{R^{(s)}}
\,\middle|\,
\hat X_0^{(s-1)},R^{(s)},t
\right)
=
p^\star\!\left(
x_{R^{(s)}}
\,\middle|\,
\hat X_0^{(s-1)},R^{(s)},t
\right).
\]
\label{corollary:resample}
\end{corollary}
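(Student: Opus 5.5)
The corollary should follow from Theorem~\ref{theorem:distribution} by a change of conditioning. A resampling pass has the same structure as one denoising step. Once we condition on $\hat X_0^{(s-1)}$ and $R^{(s)}$, the positions outside $R^{(s)}$ are fixed and play the role that the corrupted context $X_t$ plays in the theorem. The positions in $R^{(s)}$ are re-masked and play the role of $J_t$. Concretely, define the state $\tilde X_t^{(s)}$ to agree with $\hat X_0^{(s-1)}$ off $R^{(s)}$ and to be masked on $R^{(s)}$. Then $(\hat X_0^{(s-1)},R^{(s)})$ and $\tilde X_t^{(s)}$ carry the same information relevant to the resampled coordinates, and the ordered set $r_1^{(s)}<\cdots<r_{L_s}^{(s)}$ replaces $j_1<\cdots<j_{L_t}$.

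The first step is to check that the resampling pass runs the same speculative procedure. It should draft over windows of $R^{(s)}$, verify left to right with $\pi_\theta^{r_m}$, and accept with probability $\min(1,\pi/\rho)$ or else resample from the residual. The only change is that the context is now $\tilde X_t^{(s)}$.

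The second step is the core of the theorem's proof, which I would re-derive to be self-contained. Fix $m$ and condition on the event that positions $r_1,\dots,r_{m-1}$ have been finalized to values $x^{r_1},\dots,x^{r_{m-1}}$. Whether position $r_m$ is proposed in the current window or starts a new window, its verification uses the target law conditioned on exactly those finalized values. This holds because within a window, verification of $r_m$ happens only after all earlier draft tokens were accepted, so the draft prefix coincides with the accepted prefix. The standard speculative sampling lemma then gives that the output at $r_m$ is distributed as $\pi_\theta^{r_m}$, regardless of $\rho_\phi$. The lemma says that accepting with probability $\min(1,\pi/\rho)$ and otherwise sampling from $\mathrm{norm}(\max(0,\pi-\rho))$ yields a draw from $\pi$, since $\sum_x\min(\pi,\rho)+\bigl(1-\sum_x\min(\pi,\rho)\bigr)\frac{\max(0,\pi-\rho)}{\sum\max(0,\pi-\rho)}=\pi$. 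One subtlety must be handled here. The draft token $\tilde X_0^{r_m}$ was sampled jointly with the others, but independently given the window's start prefix. Conditioning on acceptance of earlier positions depends only on earlier draft tokens, so $\tilde X_0^{r_m}$ still has law $\rho^{r_m}$ under that conditioning. This is the step I expect to require the most care.

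The third step is to multiply these conditional laws by the chain rule, which gives the product of $p^\star(X_0^{r_m}=x^{r_m}\mid \tilde X_t^{(s),\ge r_m},x^{<r_m},t)$. Under the assumption that Eq.~\ref{eq:oracle target} holds on $R^{(s)}$, this product is the chain-rule factorization of $p^\star(x_{R^{(s)}}\mid \tilde X_t^{(s)},t)$. Lemma~\ref{lemma:predict} justifies dropping $\tilde X_t^{(s),<r_m}$ in favor of the clean prefix: the masked entries of that prefix are now clean values, and the unmasked entries are already clean values of $\hat X_0^{(s-1)}$. The final step is to rewrite the conditioning on $\tilde X_t^{(s)}$ as conditioning on $(\hat X_0^{(s-1)},R^{(s)},t)$, which yields the claimed identity.
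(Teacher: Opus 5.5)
Your proposal is correct and follows the paper's proof. Both condition on $\hat X_0^{(s-1)}$ and $R^{(s)}$ so that the positions outside $R^{(s)}$ act as fixed context, apply the one-step accept--reject identity at each $r_m$, and chain the resulting oracle conditionals along $r_1^{(s)}<\cdots<r_{L_s}^{(s)}$. Your extra care goes beyond the paper's brief sketch without changing the route: the draft prefix coincides with the committed prefix when $r_m$ is verified, $\tilde X_0^{r_m}$ is independent of earlier acceptance events, and Lemma~\ref{lemma:predict} justifies the final factorization. One small slip: in your displayed identity the first term should be the pointwise $\min\{\pi(x),\rho(x)\}$, not a sum over $x$.
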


The corollary is a conditional statement: once the resampling set is fixed, the same distributional correctness argument applies to the selected positions. Therefore, repeated low-confidence resampling preserves the correctness of each conditional regeneration step.

\paragraph{Expected speculative progress.}
We next quantify the expected number of positions committed in one speculative round. Let $k$ denote the window size and let $\alpha$ denote the average probability that a draft token is accepted by the target model. Following the standard analysis of speculative decoding \citep{leviathan2023fast}, we assume that acceptance events within a window are independent.

\begin{theorem}[Expected committed length]
Let $C_k$ be the number of positions committed in one speculative round, where the first rejected position, if any, is also committed after correction. Then
\[
\mathbb{E}[C_k]
=
\sum_{\ell=0}^{k-1}\alpha^\ell
=
\begin{cases}
\dfrac{1-\alpha^k}{1-\alpha}, & 0\le \alpha <1,\\[6pt]
k, & \alpha=1.
\end{cases}
\]
\label{theorem:committed length}
\end{theorem}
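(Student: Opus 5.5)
The plan is to compute the distribution of $C_k$ directly and sum its tail probabilities. Within one round, the target verifies positions $m+1,\ldots,m+k$ from left to right. Let $A_\ell$ be the event that the $\ell$-th draft token is accepted. By the independent-acceptance assumption, the $A_\ell$ are independent with $\Pr(A_\ell)=\alpha$; here $\alpha$ is interpreted, as in \citet{leviathan2023fast}, as the common per-position acceptance probability $\sum_x\min(\pi_\theta^i(x),\rho_\phi^i(x))$.

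The first step is to express $C_k$ in terms of these events. The round stops at the first rejection, and that rejected position is still committed through the corrected distribution $\pi_\theta^{\prime i}$. It follows that $C_k=\ell$ for $\ell<k$ exactly when the first $\ell-1$ drafts are accepted and the $\ell$-th is rejected. Also, $C_k=k$ exactly when the first $k-1$ drafts are accepted, whatever happens at position $k$: if the $k$-th draft is rejected, its corrected token is committed anyway. The committed count is therefore a geometric variable truncated at $k$.

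The cleanest route is the tail-sum formula $\mathbb{E}[C_k]=\sum_{\ell=1}^{k}\Pr(C_k\ge \ell)$. We have $C_k\ge\ell$ exactly when the first $\ell-1$ drafts are all accepted. By independence, this gives $\Pr(C_k\ge\ell)=\alpha^{\ell-1}$. Reindexing yields $\sum_{\ell=0}^{k-1}\alpha^\ell$. The closed form then follows from the finite geometric series when $0\le\alpha<1$, and each term equals $1$ when $\alpha=1$, giving $k$.

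The only delicate point is the boundary convention at position $k$. I expect this to be the main thing to get right, since it decides between the stated formula $\frac{1-\alpha^k}{1-\alpha}$ and the variant $\frac{1-\alpha^{k+1}}{1-\alpha}$ of \citet{leviathan2023fast}, which also counts a bonus token. I will check that the theorem's convention matches the algorithm: no extra token is sampled after a fully accepted window, and a rejection at the last slot contributes exactly one committed position. Under that reading the count saturates at $k$, and the tail-sum argument above goes through without further case analysis.
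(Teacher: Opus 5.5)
Your proposal is correct and is essentially the paper's argument. The paper also observes that $C_k>\ell$ holds exactly when the first $\ell$ drafts are accepted, uses the independence assumption to get $\Pr(C_k>\ell)=\alpha^\ell$ for $\ell=0,\dots,k-1$, and applies the tail-sum formula; your $\Pr(C_k\ge \ell)=\alpha^{\ell-1}$ is the same computation shifted by one index, and your no-bonus-token boundary convention (count saturating at $k$) matches the one the paper uses implicitly.
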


\begin{corollary}[Idealized acceleration ratio]
Let $c_\rho$ and $c_\pi$ denote the wall-clock costs of one draft proposal pass and one target verification pass, respectively. Relative to a prefix-conditioned sequential target baseline that commits one position per target pass, the idealized expected acceleration ratio is
\[
S
=
\frac{\mathbb{E}[C_k]\,c_\pi}{c_\rho+c_\pi}.
\]
In particular, if $c_\rho \approx c_\pi$, then
\[
S
\approx
\frac{\mathbb{E}[C_k]}{2}
=
\begin{cases}
\dfrac{1-\alpha^k}{2(1-\alpha)}, & 0\le \alpha <1,\\[6pt]
\dfrac{k}{2}, & \alpha=1.
\end{cases}
\]
\label{corollary:acceleration}
\end{corollary}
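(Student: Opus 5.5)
The statement to prove is Corollary~\ref{corollary:acceleration}. The plan is a direct cost-accounting argument built on Theorem~\ref{theorem:committed length}, which we take as given.

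\textbf{Baseline throughput.} The baseline is the prefix-conditioned sequential target model. It commits exactly one position per target pass, so producing $n$ positions costs $n\,c_\pi$. Its throughput is therefore $1/c_\pi$ positions per unit time.

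\textbf{Speculative throughput.} Each speculative round consists of one draft proposal pass, which is parallel over the window and costs $c_\rho$, followed by one target verification pass. That verification pass evaluates all $k$ prefix-conditioned target distributions $\pi_\theta^i$ in parallel and costs $c_\pi$. So each round costs exactly $c_\rho+c_\pi$, independent of its outcome. By Theorem~\ref{theorem:committed length}, the number of positions committed in a round, counting the corrected position at the first rejection, has mean $\mathbb{E}[C_k]$.

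\textbf{Combining the two.} Consider the number of rounds $R_n$ needed to commit $n$ positions. Under the independent-acceptance approximation, the per-round commitments $C_k^{(r)}$ are i.i.d. copies of $C_k$. By the renewal-reward theorem (or Wald's identity for the idealized long-run rate), $\mathbb{E}[R_n]/n \to 1/\mathbb{E}[C_k]$. The expected speculative cost per committed position is then $(c_\rho+c_\pi)/\mathbb{E}[C_k]$. Dividing the baseline cost per position, $c_\pi$, by this gives
\[
S=\frac{c_\pi}{(c_\rho+c_\pi)/\mathbb{E}[C_k]}=\frac{\mathbb{E}[C_k]\,c_\pi}{c_\rho+c_\pi}.
\]
The word ``idealized'' in the statement licenses two simplifications: taking the ratio of expected per-round quantities, and ignoring boundary effects at block ends, where a window may be truncated.

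\textbf{Specialization to $c_\rho\approx c_\pi$.} Substituting $c_\rho=c_\pi$ gives $S\approx \mathbb{E}[C_k]/2$. Inserting the closed form from Theorem~\ref{theorem:committed length} then yields $(1-\alpha^k)/(2(1-\alpha))$ for $0\le\alpha<1$ and $k/2$ for $\alpha=1$.

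\textbf{Main obstacle.} The only subtle point is justifying that the speedup is a ratio of expectations, $\mathbb{E}[C_k]$ over a deterministic cost, rather than an expectation of a ratio. I would handle this with the renewal argument above. Because each round's cost is deterministic, only the number of rounds is random, and the long-run ratio is exactly as stated.
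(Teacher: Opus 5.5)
Your proposal is correct and follows essentially the same cost-accounting argument as the paper. One round costs $c_\rho+c_\pi$ and commits $\mathbb{E}[C_k]$ positions in expectation, the baseline pays $c_\pi$ per position, and the $c_\rho\approx c_\pi$ case follows by substituting Theorem~\ref{theorem:committed length}. Your renewal/Wald step, which justifies reading $S$ as a ratio of expectations under the added assumption of i.i.d.\ rounds, is extra rigor that the paper skips (it simply compares $\mathbb{E}[C_k]\,c_\pi$ against one round's cost), but it does not change the route.
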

Thus, larger windows and higher acceptance rates increase the expected number of committed positions per speculative round, while the actual wall-clock gain is additionally modulated by the relative costs of drafting and verification.

\section{Experiment}
\label{sec:experiments}

\subsection{Experimental Setup}
\label{sec:exp_setup}
We use a supervised finetuning set of 72 thousand prompt--response pairs for training, covering both code and math data. Each training example is organized as a prompt--response pair, where only response tokens are masked and predicted. The finetuned base model is optimized with the proposed position-conditioned objective in Eq.~\ref{eq:modified loss function}. We optimize all models with AdamW using a weight decay of 0.1. The learning rate is initialized at $1\times10^{-5}$, with 50 warmup steps, followed by a constant phase and a linear decay over the final 10\% of training steps to 0.1 times the peak learning rate. We use a per-device batch size of 1 and gradient accumulation over 2 steps, yielding a global batch size of 16. We train for 1 epochs and use bf16 mixed precision. 

During speculation, both the draft and target models in speculative decoding are instantiated with the fine-tuned model, and the window size is set to 16. This choice is further validated in the ablation study. All training and inference experiments are conducted on 8 NVIDIA A100 80GB GPUs.

\subsection{Main Results}
We evaluate the models on four widely used benchmarks: GSM8K \citep{cobbe2021gsm8k}, MATH \citep{hendrycks2021math}, HumanEval \citep{chen2021humaneval}, and MBPP \citep{austin2021mbpp}, covering both mathematical reasoning and code generation. To ensure reproducibility, we rely on the standardized OpenCompass \citep{2023opencompass} evaluation pipeline. We report two main metrics: Accuracy, which measures final task performance, and Speedup, which measures the wall-clock acceleration ratio relative to the baseline decoding method. In all experiments, we use the inference time of LLaDA-Instruct as the reference runtime, denoted as $1\times$.

We adopt LLaDA-Instruct~\citep{nie2025llada} as the backbone model and build our method upon it; consequently, LLaDA-Instruct serves as the primary baseline for comparison. In addition, we compare our approach with several representative methods, including SSD~\citep{gao2025self}, DDOSP~\citep{lavenant2025error}, and DCD~\citep{liudiscrete}. Since the original papers, with the exception of LLaDA-Instruct, do not report results on the benchmarks considered in this work, we reproduce all remaining baselines under a unified evaluation protocol. Further implementation details are provided in Appendix~\ref{sec:compared_methods}. It is worth noting that the original LLaDA decodes only one token at each step, which partially mitigates factorization error. To more directly evaluate the effectiveness of our method under a comparable multi-token decoding setting, we additionally reproduce a variant of LLaDA that decodes two tokens per step, denoted as LLaDA/2. The results are reported in Table~\ref{tab:main_results}.

\begin{table*}[t]
    \centering
    \small
    \setlength{\tabcolsep}{3.8pt}
    \caption{Main results on mathematical reasoning and code generation benchmarks. Best accuracy values are in bold, and fastest speed values are underlined. Mean denotes the average results across all four benchmarks. For FeF-DLLM, step denotes the number of diffusion model steps.}
    \label{tab:main_results}
    \resizebox{\textwidth}{!}{
    \begin{tabular}{@{}l cc cc cc cc cc@{}}
        \toprule
        \multirow{2}{*}{Methods}
        & \multicolumn{2}{c}{GSM8K}
        & \multicolumn{2}{c}{MATH}
        & \multicolumn{2}{c}{HumanEval}
        & \multicolumn{2}{c}{MBPP}
        & \multicolumn{2}{c}{Mean} \\
        \cmidrule(lr){2-3}
        \cmidrule(lr){4-5}
        \cmidrule(lr){6-7}
        \cmidrule(lr){8-9}
        \cmidrule(lr){10-11}
        & Acc. & Speed & Acc. & Speed & Acc. & Speed & Acc. & Speed & Acc. & Speed \\
        \midrule
        LLaDA \citep{nie2025llada}
        & 78.60 & $1.00\times$ & 26.60 & $1.00\times$ & 47.60 & $1.00\times$ & 34.20\footnotemark & $1.00\times$ & 46.75 & $1.00\times$ \\
        LLaDA/2 \citep{nie2025llada}
        & 76.42 & $1.92\times$ & 25.46 & $1.99\times$ & 32.32 & $2.02\times$ & 35.00 & $1.98\times$ & 42.30 & $1.98\times$ \\
        \midrule
        SSD \citep{gao2025self}
        & 77.10 & $2.23\times$ & 34.94 & $2.16\times$ & 43.09 & $2.12\times$ & 39.20 & $1.83\times$ & 48.58 & $2.09\times$ \\
        DDOSP \citep{lavenant2025error}
        & 74.15 & $1.92\times$ & 25.78 & $2.03\times$ & 28.05 & $2.01\times$ & 29.20 & $1.98\times$ & 39.30 & $1.98\times$ \\
        DCD \citep{liudiscrete}
        & 78.24 & $0.36\times$ & 26.36 & $0.51\times$ & \textbf{50.00} & $0.43\times$ & 37.60 & $0.40\times$ & 48.05 & $0.43\times$ \\
        \midrule
        FeF-DLLM (step=2)
        & 79.38 & $\underline{3.55\times}$ & 36.40 & $\underline{3.25\times}$ & 48.78 & $\underline{3.76\times}$ & \textbf{42.60} & $\underline{4.89\times}$ & 51.79 & $\underline{3.86\times}$ \\
        FeF-DLLM (step=4)
        & \textbf{79.68} & $2.14\times$ & \textbf{36.56} & $1.99\times$ & 49.39 & $2.21\times$ & \textbf{42.60} & $2.99\times$ & \textbf{52.06} & $2.33\times$ \\
        \bottomrule
    \end{tabular}
    }
\end{table*}
\footnotetext{The reproduced LLaDA accuracy on MBPP is 37.40; we report 34.20 from the original LLaDA result for consistency.}

We draw two key observations from Table~\ref{tab:main_results}. First, increasing the number of tokens decoded at each step can substantially improve inference throughput, but it may also lead to a pronounced degradation in generation quality. For example, LLaDA/2 achieves approximately a $2\times$ speedup over LLaDA; however, its performance decreases on GSM8K, MATH, and HumanEval. This suggests that simply increasing decoding parallelism, in the absence of an explicit correction mechanism, can exacerbate factorization errors and compromise prediction accuracy. This observation validates our motivation that efficient parallel decoding for diffusion language models requires an effective correction mechanism to mitigate factorization errors.

Second, FeF-DLLM provides a stronger accuracy--efficiency trade-off than the baseline, direct speculative decoding methods, and prior approaches for mitigating factorization errors. Averaged over the four benchmarks, FeF-DLLM with step=2 improves the mean accuracy from 46.75 to 51.79, achieving a 5.04 point gain over LLaDA, while boosting the mean decoding speed to $3.86\times$. When using step=4, FeF-DLLM further increases the mean accuracy to 52.06, yielding a 5.31 point improvement over LLaDA, while still providing a $2.33\times$ speedup. Notably, FeF-DLLM also surpasses SSD, a direct speculative decoding baseline, by 3.21 and 3.48 accuracy points under step=2 and step=4, respectively. In addition, compared with factorization-error mitigation methods such as DDOSP and DCD, FeF-DLLM achieves substantially faster inference, showing that resampling-based correction can effectively improve accuracy while preserving high decoding efficiency.

\subsection{Ablation Study}

We conduct four ablation studies to evaluate the contributions of different components in FeF-DLLM, focusing on the effects of training, speculative decoding, draft-model selection and window size of speculative decoding.

\paragraph{Ablation 1: Training Effects}
LLaDA-Instruct can also be directly used for inference with the proposed method without further finetuning its neural network parameters. Meanwhile, to rule out the possibility that the observed improvements are primarily attributable to finetuning rather than to the proposed inference strategy, we also evaluate the trained model using the original inference procedure of LLaDA-Instruct. The results are reported in Table~\ref{tab:ablation_training}.

\begin{table*}[t]
    \centering
    \small
    \setlength{\tabcolsep}{3.8pt}
    \caption{Ablation study on training effects. Best accuracy values within each comparison group are in bold. Mean denotes the average results across all four benchmarks.}
    \label{tab:ablation_training}
    \resizebox{\textwidth}{!}{
    \begin{tabular}{@{}l cc cc cc cc cc@{}}
        \toprule
        \multirow{2}{*}{Methods}
        & \multicolumn{2}{c}{GSM8K}
        & \multicolumn{2}{c}{MATH}
        & \multicolumn{2}{c}{HumanEval}
        & \multicolumn{2}{c}{MBPP}
        & \multicolumn{2}{c}{Mean} \\
        \cmidrule(lr){2-3}
        \cmidrule(lr){4-5}
        \cmidrule(lr){6-7}
        \cmidrule(lr){8-9}
        \cmidrule(lr){10-11}
        & Acc. & Speed & Acc. & Speed & Acc. & Speed & Acc. & Speed & Acc. & Speed \\
        \midrule
        LLaDA
        & 78.60 & $1.00\times$ & 26.60 & $1.00\times$ & 47.60 & $1.00\times$ & 34.20 & $1.00\times$ & 46.75 & $1.00\times$ \\
        LLaDA w/ train
        & 76.19 & $1.00\times$ & 26.38 & $1.00\times$ & 46.34 & $1.00\times$ & 36.20 & $1.00\times$ & 46.28 & $1.00\times$ \\
        \midrule
        FeF-DLLM w/o train (step=2)
        & 77.86 & $3.59\times$ & 35.20 & $3.30\times$ & 46.95 & $3.88\times$ & 40.60 & $4.92\times$ & 50.15 & $3.92\times$ \\
        FeF-DLLM w/o train (step=4)
        & 77.86 & $2.17\times$ & 35.48 & $2.01\times$ & 47.56 & $2.29\times$ & 40.40 & $3.01\times$ & 50.33 & $2.37\times$ \\
        \midrule
        FeF-DLLM (step=2)
        & 79.38 & $3.55\times$ & 36.40 & $3.25\times$ & 48.78 & $3.76\times$ & 42.60 & $4.89\times$ & 51.79 & $3.86\times$ \\
        FeF-DLLM (step=4)
        & \textbf{79.68} & $2.14\times$ & \textbf{36.56} & $1.99\times$ & \textbf{49.39} & $2.21\times$ & \textbf{42.60} & $2.99\times$ & \textbf{52.06} & $2.33\times$ \\
        \bottomrule
    \end{tabular}
    }
\end{table*}

The trained LLaDA model exhibits slight performance drops on GSM8K, MATH, and HumanEval, and only improves on MBPP. In contrast, FeF-DLLM with training consistently outperforms its untrained counterpart across all four benchmarks. These results demonstrate that the gains achieved by our method are driven by the joint effect of finetuning and the proposed inference strategy.

\paragraph{Ablation 2: Effect of Speculative Decoding}

To demonstrate the acceleration effect of speculative decoding, we compare FeF-DLLM with a counterpart that uses the same model but disables speculative decoding. Table~\ref{tab:ablation_speculative} reports the results. Here, FeF-DLLM w/o SD denotes the non-speculative setting. The results show that incorporating speculative decoding substantially improves inference speed while preserving accuracy.

\begin{table*}[t]
    \centering
    \small
    \setlength{\tabcolsep}{3.8pt}
    \caption{Ablation study on the effect of speculative decoding (SD). All models in this table are trained models. Mean denotes the average results across all four benchmarks.}
    \label{tab:ablation_speculative}
    \resizebox{\textwidth}{!}{
    \begin{tabular}{@{}l cc cc cc cc cc@{}}
        \toprule
        \multirow{2}{*}{Methods}
        & \multicolumn{2}{c}{GSM8K}
        & \multicolumn{2}{c}{MATH}
        & \multicolumn{2}{c}{HumanEval}
        & \multicolumn{2}{c}{MBPP}
        & \multicolumn{2}{c}{Mean} \\
        \cmidrule(lr){2-3}
        \cmidrule(lr){4-5}
        \cmidrule(lr){6-7}
        \cmidrule(lr){8-9}
        \cmidrule(lr){10-11}
        & Acc. & Speed & Acc. & Speed & Acc. & Speed & Acc. & Speed & Acc. & Speed \\
        \midrule
        FeF-DLLM w/o SD (step=2)
        & 79.38 & $0.69\times$ & 36.40 & $0.67\times$ & 48.78 & $0.66\times$ & 42.60 & $0.67\times$ & 51.79 & $0.67\times$ \\
        FeF-DLLM w/o SD (step=4)
        & 79.68 & $0.41\times$ & 36.56 & $0.40\times$ & 49.39 & $0.39\times$ & 42.60 & $0.40\times$ & 52.06 & $0.40\times$ \\
        \midrule
        FeF-DLLM (step=2)
        & 79.38 & $\mathbf{3.55\times}$ & 36.40 & $\mathbf{3.25\times}$ & 48.78 & $\mathbf{3.76\times}$ & 42.60 & $\mathbf{4.89\times}$ & 51.79 & $\mathbf{3.86}\times$ \\
        FeF-DLLM (step=4)
        & 79.68 & $\mathbf{2.14\times}$ & 36.56 & $\mathbf{1.99\times}$ & 49.39 & $\mathbf{2.21\times}$ & 42.60 & $\mathbf{2.99\times}$ & 52.06 & $\mathbf{2.33}\times$ \\
        \bottomrule
    \end{tabular}
    }
\end{table*}

\paragraph{Ablation 3: Choice of Draft Model}

In all previous experiments, we use the same model as both the draft model and the verify model by default. Here, we further analyze how the choices of draft models affect final performance. The results are reported in Table~\ref{tab:ablation_draft_verify}. In addition to task accuracy and wall-clock speedup, we also report the average Acceptance, which measures the fraction of draft tokens accepted by the verify model.

\begin{table*}[t]
    \centering
    \small
    \setlength{\tabcolsep}{3.5pt}
    \caption{Ablation study on the choice of draft models. ``A / B'' denotes using A as the draft model and B as the verify model. All results are reported with resampling step $=4$.}
    \label{tab:ablation_draft_verify}
    \begin{tabular}{l *{6}{C{1.55cm}}}
        \toprule
        \multirow{2}{*}{Task}
        & \multicolumn{3}{c}{FeF-DLLM / FeF-DLLM}
        & \multicolumn{3}{c}{FeF-DLLM w/o train / FeF-DLLM} \\
        \cmidrule(lr){2-4} \cmidrule(lr){5-7}
        & Acc. & Speed & Acceptance & Acc. & Speed & Acceptance \\
        \midrule
        GSM8K     & 79.68 & $2.14\times$ & \textbf{63.29} & 79.68 & $2.12\times$ & 62.55 \\
        MATH      & 36.56 & $1.99\times$ & \textbf{58.24} & 36.56 & $1.93\times$ & 56.17 \\
        HumanEval & 49.39 & $2.21\times$ & \textbf{65.99} & 49.39 & $2.17\times$ & 64.52 \\
        MBPP      & 42.60 & $2.99\times$ & \textbf{92.84} & 42.60 & $2.89\times$ & 91.77 \\
        \bottomrule
    \end{tabular}
\end{table*}

The results show that the final accuracy remains unchanged across the two draft-model choices on all four benchmarks. This suggests that, in this setting, the verify model primarily determines the final prediction quality. Meanwhile, using the same model for both drafting and verification leads to consistently higher acceptance rates and slightly better speedup. This observation is consistent with our analysis in Corollary \ref{corollary:acceleration}: when the draft model and the verify model are better aligned, the verify model accepts more draft tokens, which leads to more efficient speculative decoding.

\paragraph{Ablation 4: Speculative Decoding Window Size}

We further study the effect of the speculative decoding window size. Table~\ref{tab:ablation_window_size} reports the results with resampling step $=4$. The results show that increasing the window size substantially improves decoding speed while preserving the same accuracy. Larger window sizes could not be implemented due to device limitations. Therefore, a window size of 16 was selected for the formal experiment.
\begin{table*}[!t]
    \centering
    \small
    \setlength{\tabcolsep}{3.8pt}
    \caption{Ablation study on the speculative decoding window size. All results are reported with resampling step $=4$. Mean denotes the average results across all four benchmarks.}
    \label{tab:ablation_window_size}
    \resizebox{\textwidth}{!}{
    \begin{tabular}{@{}l c cc cc cc cc cc@{}}
        \toprule
        \multirow{2}{*}{Variant}
        & \multirow{2}{*}{Window Size}
        & \multicolumn{2}{c}{GSM8K}
        & \multicolumn{2}{c}{MATH}
        & \multicolumn{2}{c}{HumanEval}
        & \multicolumn{2}{c}{MBPP}
        & \multicolumn{2}{c}{Mean} \\
        \cmidrule(lr){3-4}
        \cmidrule(lr){5-6}
        \cmidrule(lr){7-8}
        \cmidrule(lr){9-10}
        \cmidrule(lr){11-12}
        & & Acc. & Speed & Acc. & Speed & Acc. & Speed & Acc. & Speed & Acc. & Speed \\
        \midrule
        \multirow{3}{*}{FeF-DLLM}
        & 4  & 79.68 & $0.77\times$ & 36.56 & $0.76\times$ & 49.39 & $0.77\times$ & 42.60 & $0.80\times$ & 52.06 & $0.78\times$ \\
        & 8  & 79.68 & $1.38\times$ & 36.56 & $1.34\times$ & 49.39 & $1.40\times$ & 42.60 & $1.57\times$ & 52.06 & $1.42\times$ \\
        & 16 & 79.68 & $\mathbf{2.14\times}$ & 36.56 & $\mathbf{1.99\times}$ & 49.39 & $\mathbf{2.21\times}$ & 42.60 & $\mathbf{2.99\times}$ & 52.06 & $\mathbf{2.33\times}$ \\
        \bottomrule
    \end{tabular}
    }
\end{table*}

\section{Limitation and Conclusion}

In this paper, we proposed FeF-DLLM, a factorization-error-free discrete diffusion language modeling method based on prefix-conditioned clean-token prediction. By replacing independent token-wise prediction with exact prefix-conditioned factorization, FeF-DLLM preserves token dependencies and eliminates factorization error. We further incorporated speculative decoding to accelerate inference while maintaining the correctness of the target distribution. Experiments on mathematical reasoning and code generation benchmarks show that FeF-DLLM improves generation quality and achieves substantial inference speedup.

One limitation of our method is that prefix-conditioned prediction and speculative verification require more computational resources during inference than standard DLLM decoding. Future work will explore more resource-efficient implementations to further reduce this overhead.

\bibliographystyle{plainnat}
\bibliography{references}

@inproceedings{leviathan2023fast,
  title={Fast Inference from Transformers via Speculative Decoding},
  author={Leviathan, Yaniv and Kalman, Matan and Matias, Yossi},
  booktitle={Proceedings of the 40th International Conference on Machine Learning (ICML)},
  pages={19274--19286},
  year={2023},
  publisher={PMLR}
}

@inproceedings{ho2020denoising,
  title={Denoising Diffusion Probabilistic Models},
  author={Ho, Jonathan and Jain, Ajay and Abbeel, Pieter},
  booktitle={Advances in Neural Information Processing Systems (NeurIPS)},
  volume={33},
  pages={6840--6851},
  year={2020}
}

@inproceedings{austin2021structured,
  title={Structured Denoising Diffusion Models in Discrete State-spaces},
  author={Austin, Jacob and Johnson, Daniel D. and Ho, Jonathan and Tarlow, Daniel and Van Den Berg, Rianne},
  booktitle={Advances in Neural Information Processing Systems (NeurIPS)},
  volume={34},
  pages={17981--17993},
  year={2021}
}

@inproceedings{lou2024discrete,
  title={Discrete Diffusion Language Modeling by Estimating the Ratios of the Data Distribution},
  author={Lou, Aaron and Meng, Chenlin and Ermon, Stefano},
  booktitle={International Conference on Learning Representations (ICLR)},
  year={2024}
}

@inproceedings{sahoo2024simple,
  title={Simple and Effective Masked Diffusion Language Models},
  author={Sahoo, Subham Sekhar and Arriola, Marianne and Schiff, Yair and Gokaslan, Aaron and Marroquin, Edgar and Chiu, Justin T. and Rush, Alexander and Kuleshov, Volodymyr},
  booktitle={Advances in Neural Information Processing Systems (NeurIPS)},
  volume={37},
  year={2024}
}

@article{nie2025llada,
  title={{LLaDA}: Large Language Diffusion Models},
  author={Nie, Shen and Zhu, Fengqi and You, Zebin and Zhang, Xiaolu and Ou, Jingyang and Hu, Jun and Zhou, Jun and others},
  journal={arXiv preprint arXiv:2502.09992},
  year={2025}
}

@article{cobbe2021gsm8k,
  title={Training Verifiers to Solve Math Word Problems},
  author={Cobbe, Karl and Kosaraju, Vineet and Bavarian, Mohammad and Chen, Mark and Jun, Heewoo and Kaiser, {\L}ukasz and Plappert, Matthias and Tworek, Jerry and Hilton, Jacob and Nakano, Reiichiro and others},
  journal={arXiv preprint arXiv:2110.14168},
  year={2021}
}

@article{hendrycks2021math,
  title={Measuring Mathematical Problem Solving with the {MATH} Dataset},
  author={Hendrycks, Dan and Burns, Collin and Kadavath, Saurav and Arora, Akul and Basart, Steven and Tang, Eric and Song, Dawn and Steinhardt, Jacob},
  journal={arXiv preprint arXiv:2103.03874},
  year={2021}
}

@article{chen2021humaneval,
  title={Evaluating Large Language Models Trained on Code},
  author={Chen, Mark and Tworek, Jerry and Jun, Heewoo and Yuan, Qiming and Pinto, Henrique P. de Oliveira and Kaplan, Jared and Edwards, Harri and Burda, Yuri and Joseph, Nicholas and Brockman, Greg and others},
  journal={arXiv preprint arXiv:2107.03374},
  year={2021}
}

@article{austin2021mbpp,
  title={Program Synthesis with Large Language Models},
  author={Austin, Jacob and Odena, Augustus and Nye, Maxwell and Bosma, Maarten and Michalewski, Henryk and Dohan, David and Jiang, Ellen and Cai, Carrie and Terry, Michael and Le, Quoc and others},
  journal={arXiv preprint arXiv:2108.07732},
  year={2021}
}

@article{gao2025self,
  title={Self Speculative Decoding for Diffusion Large Language Models},
  author={Gao, Yifeng and Ji, Ziang and Wang, Yuxuan and Qi, Biqing and Xu, Hanlin and Zhang, Linfeng},
  journal={arXiv preprint arXiv:2510.04147},
  year={2025}
}

@inproceedings{lipmanflow,
  title={Flow Matching for Generative Modeling},
  author={Lipman, Yaron and Chen, Ricky TQ and Ben-Hamu, Heli and Nickel, Maximilian and Le, Matthew},
  booktitle={The Eleventh International Conference on Learning Representations},
  year={2022}
}

@inproceedings{songscore,
  title={Score-Based Generative Modeling through Stochastic Differential Equations},
  author={Song, Yang and Sohl-Dickstein, Jascha and Kingma, Diederik P. and Kumar, Abhishek and Ermon, Stefano and Poole, Ben},
  booktitle={International Conference on Learning Representations},
  year={2020}
}

@inproceedings{liudiscrete,
  title={Discrete Copula Diffusion},
  author={Liu, Anji and Broadrick, Oliver and Niepert, Mathias and Van den Broeck, Guy},
  booktitle={The Thirteenth International Conference on Learning Representations},
  year={2024}
}

@article{campbell2025self,
  title={Self-speculative Masked Diffusions},
  author={Campbell, Andrew and De Bortoli, Valentin and Shi, Jiaxin and Doucet, Arnaud},
  journal={arXiv preprint arXiv:2510.03929},
  year={2025}
}

@inproceedings{yooredi,
  title={{ReDi}: Rectified Discrete Flow},
  author={Yoo, Jaehoon and Kim, Wonjung and Hong, Seunghoon},
  booktitle={The Thirty-ninth Annual Conference on Neural Information Processing Systems},
  year={2025}
}

@article{cheng2025deerdraftdiffusionverify,
  title={{DEER}: Draft with Diffusion, Verify with Autoregressive Models},
  author={Cheng, Zicong and Yang, Guo-Wei and Li, Jia and Deng, Zhijie and Guo, Meng-Hao and Hu, Shi-Min},
  journal={arXiv preprint arXiv:2512.15176},
  year={2025}
}

@article{lavenant2025error,
  title={Error Bounds and Optimal Schedules for Masked Diffusions with Factorized Approximations},
  author={Lavenant, Hugo and Zanella, Giacomo},
  journal={arXiv preprint arXiv:2510.25544},
  year={2025}
}

@article{ho2022video,
  title={Video Diffusion Models},
  author={Ho, Jonathan and Salimans, Tim and Gritsenko, Alexey and Chan, William and Norouzi, Mohammad and Fleet, David J.},
  journal={Advances in Neural Information Processing Systems (NeurIPS)},
  volume={35},
  pages={8633--8646},
  year={2022}
}

@inproceedings{bar2024lumiere,
  title={{Lumiere}: A Space-Time Diffusion Model for Video Generation},
  author={Bar-Tal, Omer and Chefer, Hila and Tov, Omer and Herrmann, Charles and Paiss, Roni and Zada, Shiran and Ephrat, Ariel and Hur, Junhwa and Liu, Guanghui and Raj, Amit and others},
  booktitle={SIGGRAPH Asia 2024 Conference Papers},
  pages={1--11},
  year={2024}
}

@inproceedings{peebles2023scalable,
  title={Scalable Diffusion Models with Transformers},
  author={Peebles, William and Xie, Saining},
  booktitle={Proceedings of the IEEE/CVF International Conference on Computer Vision},
  pages={4195--4205},
  year={2023}
}

@article{bie2025llada20scalingdiffusionlanguage,
  title={{LLaDA2.0}: Scaling Up Diffusion Language Models to 100B},
  author={Bie, Tiwei and Cao, Maosong and Chen, Kun and Du, Lun and Gong, Mingliang and Gong, Zhuochen and Gu, Yanmei and Hu, Jiaqi and Huang, Zenan and Lan, Zhenzhong and Li, Chengxi and Li, Chongxuan and Li, Jianguo and Li, Zehuan and Liu, Huabin and Liu, Lin and Lu, Guoshan and Lu, Xiaocheng and Ma, Yuxin and Tan, Jianfeng and Wei, Lanning and Wen, Ji-Rong and Xing, Yipeng and Zhang, Xiaolu and Zhao, Junbo and Zheng, Da and Zhou, Jun and Zhou, Junlin and Zhou, Zhanchao and Zhu, Liwang and Zhuang, Yihong},
  journal={arXiv preprint arXiv:2512.15745},
  year={2025}
}

@misc{2023opencompass,
  title={{OpenCompass}: A Universal Evaluation Platform for Foundation Models},
  author={OpenCompass Contributors},
  howpublished={\url{https://github.com/open-compass/opencompass}},
  year={2023}
}

@inproceedings{songdenoising,
  title={Denoising Diffusion Implicit Models},
  author={Song, Jiaming and Meng, Chenlin and Ermon, Stefano},
  booktitle={International Conference on Learning Representations},
  year={2020}
}

@article{ye2025dream,
  title={{Dream} 7B: Diffusion Large Language Models},
  author={Ye, Jiacheng and Xie, Zhihui and Zheng, Lin and Gao, Jiahui and Wu, Zirui and Jiang, Xin and Li, Zhenguo and Kong, Lingpeng},
  journal={arXiv preprint arXiv:2508.15487},
  year={2025}
}

@article{gat2024discrete,
  title={Discrete Flow Matching},
  author={Gat, Itai and Remez, Tal and Shaul, Neta and Kreuk, Felix and Chen, Ricky TQ and Synnaeve, Gabriel and Adi, Yossi and Lipman, Yaron},
  journal={Advances in Neural Information Processing Systems (NeurIPS)},
  volume={37},
  pages={133345--133385},
  year={2024}
}

@inproceedings{rombach2022high,
  title={High-Resolution Image Synthesis with Latent Diffusion Models},
  author={Rombach, Robin and Blattmann, Andreas and Lorenz, Dominik and Esser, Patrick and Ommer, Bj{\"o}rn},
  booktitle={Proceedings of the IEEE/CVF Conference on Computer Vision and Pattern Recognition},
  pages={10684--10695},
  year={2022}
}

@inproceedings{ma2024sit,
  title={{SiT}: Exploring Flow and Diffusion-Based Generative Models with Scalable Interpolant Transformers},
  author={Ma, Nanye and Goldstein, Mark and Albergo, Michael S. and Boffi, Nicholas M. and Vanden-Eijnden, Eric and Xie, Saining},
  booktitle={European Conference on Computer Vision},
  pages={23--40},
  year={2024},
  organization={Springer}
}

@inproceedings{wu2024seesr,
  title={{SeeSR}: Towards Semantics-Aware Real-World Image Super-Resolution},
  author={Wu, Rongyuan and Yang, Tao and Sun, Lingchen and Zhang, Zhengqiang and Li, Shuai and Zhang, Lei},
  booktitle={Proceedings of the IEEE/CVF Conference on Computer Vision and Pattern Recognition},
  pages={25456--25467},
  year={2024}
}

@article{yim2024diffusion,
  title={Diffusion Models in Protein Structure and Docking},
  author={Yim, Jason and St{\"a}rk, Hannes and Corso, Gabriele and Jing, Bowen and Barzilay, Regina and Jaakkola, Tommi S.},
  journal={Wiley Interdisciplinary Reviews: Computational Molecular Science},
  volume={14},
  number={2},
  pages={e1711},
  year={2024},
  publisher={Wiley Online Library}
}

@article{wang2025diffusion,
  title={Diffusion Models for 3D Generation: A Survey},
  author={Wang, Chen and Peng, Hao-Yang and Liu, Ying-Tian and Gu, Jiatao and Hu, Shi-Min},
  journal={Computational Visual Media},
  volume={11},
  number={1},
  pages={1--28},
  year={2025},
  publisher={TUP}
}

@article{chen2023accelerating,
  title={Accelerating Large Language Model Decoding with Speculative Sampling},
  author={Chen, Charlie and Borgeaud, Sebastian and Irving, Geoffrey and Lespiau, Jean-Baptiste and Sifre, Laurent and Jumper, John},
  journal={arXiv preprint arXiv:2302.01318},
  year={2023}
}

\appendix

\section{Implementation Details of Compared Methods}
\label{sec:compared_methods}

We compare against three representative inference-time baselines, namely SSD \citep{gao2025self}, DCD \citep{liudiscrete}, and DDOSP \citep{lavenant2025error}. For all comparing methods, we use the same LLaDA-Instruct checkpoint, prompt formatting, maximum generation length, and OpenCompass evaluation pipeline as in the main experiments. Unless otherwise stated, we keep the denoising step budget, block partition, temperature, classifier-free guidance scale, and re-masking rule identical to the LLaDA baseline, so that the differences come only from the inference strategy itself.

\paragraph{SSD.}
We implement SSD as a training-free decoding wrapper around the same LLaDA checkpoint. At each denoising step, the model first predicts all masked positions in parallel and uses the top-1 token at each masked position as the self-draft token. Candidate positions are selected within the current semi-autoregressive decoding block according to model confidence, and a greedy linear verification tree is constructed. All verification nodes are then packed into one batch and verified by the same LLaDA model. Therefore, SSD does not introduce an auxiliary draft model or any additional training. In the experiments, the same checkpoint is used as both the drafter and the verifier.

\paragraph{DCD.}
The original DCD formulation requires a diffusion model and an autoregressive copula model that share exactly the same tokenizer and token-to-id mapping. Because standard causal language models such as GPT-2 are not tokenizer-compatible with LLaDA, we implement a causalized DCD-like baseline by reusing the same LLaDA checkpoint with a causal attention bias as the copula model. At each denoising step, we compute diffusion logits $l_{\mathrm{diff}}$ and causalized copula logits $l_{\mathrm{cop}}$, and form the proposal logits as
\[
    l_{\mathrm{prop}} = l_{\mathrm{cop}} + \alpha l_{\mathrm{diff}},
\]
where $\alpha$ controls the strength of the diffusion proposal and is set to 1.0 by default. For efficiency, we use the one-pass causal proposal mode in the main experiments, namely one causal forward pass per denoising step, while the sequential mode is reserved for additional analysis because it is substantially slower. Since LLaDA does not expose the same score/transition pair as the original Copula-Diffusion implementation, this baseline should be regarded as a DCD-style approximation rather than an exact reimplementation of the original method.

\paragraph{DDOSP.}
DDOSP does not train a new model; instead, it replaces the handcrafted uniform unmasking schedule with a data-driven non-uniform schedule estimated from the denoiser. Specifically, we use training-set response sequences to estimate the information profile $f(i)$ under different numbers of revealed tokens, where the expectation is approximated by Monte Carlo masking and batched denoiser forward passes. The estimated profile is then smoothed and further corrected to be monotone before computing the incremental information gains $\Delta f(i)$. Based on these gains, we construct a $K$-step decoding schedule, cache the resulting schedule file, and reuse it during inference. During sampling, DDOSP only changes how many masked positions are released at each denoising round; the denoiser, token proposal rule, blockwise generation order, temperature, and re-masking strategy remain the same as in the LLaDA baseline. When no precomputed schedule is available, the implementation falls back to the uniform schedule.

\newpage

\section{Implementation Details of Training}

This section supplements the training details not explicitly reported in Section~\ref{sec:exp_setup}. In addition to the hardware and optimization settings described there, we report the released supplementary training runs and their usage in evaluation.

For the supplementary training setup, all runs were executed on 8 NVIDIA A100 80GB GPUs. The math training run used 18k examples for 1 epoch and took approximately 17.6 hours, corresponding to about 140.8 GPU-hours; the resulting model was used for evaluation on GSM8K and MATH. The code training run used 54k examples for 1 epoch and took approximately 52.8 hours, corresponding to about 422.4 GPU-hours; the resulting model was used for evaluation on HumanEval and MBPP. Summing these two runs gives an approximate total of 563.2 GPU-hours for the released supplementary training runs.

Additional implementation details are provided in Appendix~\ref{sec:compared_methods} and Appendix~\ref{sec:supp_algorithm}.

\newpage

\section{Algorithm}
\label{sec:supp_algorithm}

This section provides the detailed algorithms for FeF-DLLM.
Algorithm~\ref{alg:position_conditioned_training} summarizes the
position-conditioned training procedure. Algorithm~\ref{alg:fef_dllm_full_inference}
describes the full-sequence inference procedure, which combines speculative
verification with low-confidence re-corruption.

Before presenting the inference algorithm, we define the residual distribution
used when a draft token is rejected. For a target law $\pi_\theta^{j_m}$ and a
draft law $\rho_\phi^{j_m}$ at position $j_m$, define
\begin{equation}
    \pi_\theta^{\prime j_m}(x)
    =
    \frac{
        \left[\pi_\theta^{j_m}(x)-\rho_\phi^{j_m}(x)\right]_+
    }{
        1-\sum_{z\in\mathcal{V}}
        \min\{\pi_\theta^{j_m}(z),\rho_\phi^{j_m}(z)\}
    },
    \label{eq:residual_distribution_alg}
\end{equation}
where $[u]_+=\max\{u,0\}$. If the denominator is zero, the rejection event has
probability zero, and the residual distribution is never sampled.

\begin{algorithm}[t]
\caption{Position-Conditioned Training for FeF-DLLM}
\label{alg:position_conditioned_training}
\begin{algorithmic}[1]
\Require Predictor $p_\theta$, data distribution $q(X_0)$, response position set $J_{\mathrm{resp}}$
\Ensure Trained predictor $p_\theta$
\Repeat
    \State Sample clean data $X_0\sim q(X_0)$.
    \State Sample diffusion time $t$.
    \State Sample corrupted data $X_t\sim q(X_t\mid X_0)$.
    \State Define the ordered prediction positions:
    \Statex
    \[
        J_t
        =
        \{i\in J_{\mathrm{resp}}: X_t^i \neq X_0^i\}
        =
        \{j_1,\dots,j_{L_t}\},
        \qquad
        j_1<\cdots<j_{L_t},
        \qquad
        L_t=|J_t|.
    \]
    \State Initialize loss $\mathcal{L}\leftarrow 0$.
    \For{$m=1$ to $L_t$}
        \State Construct the position-conditioned input $X^{(j_m)}$:
        \Statex
        \[
        X^{(j_m),i}
        =
        \begin{cases}
            X_0^i, & i<j_m, \\
            X_t^i, & i\ge j_m .
        \end{cases}
        \]
        \State Compute the token-level cross-entropy loss:
        \Statex
        \[
            \mathrm{CE}_{j_m}
            =
            -
            \log
            p_\theta
            \left(
                X_0^{j_m}
                \mid
                X^{(j_m)},t
            \right).
        \]
        \State Accumulate loss:
        \Statex
        \[
            \mathcal{L}\leftarrow \mathcal{L}+\mathrm{CE}_{j_m}.
        \]
    \EndFor
    \State Compute $\nabla_\theta \mathcal{L}$ and update $\theta$ with the optimizer.
\Until{converged}
\State \Return $p_\theta$
\end{algorithmic}
\end{algorithm}

Algorithm~\ref{alg:position_conditioned_training} trains the model to predict
each clean token from a position-conditioned input. For position $j_m$, tokens
before $j_m$ are replaced by their clean values from $X_0$, while position
$j_m$ and the suffix remain in the corrupted state $X_t$. This construction
matches the verification-time input used by the target model during
left-to-right speculative verification, thereby aligning training with
factorization-error-free inference.

\begin{algorithm}[ht]
\caption{Full-Sequence FeF-DLLM Decoding}
\label{alg:fef_dllm_full_inference}
\begin{algorithmic}[1]
\Require Initial corrupted sequence $X_t$, prediction positions $J_{\mathrm{pred}}$, draft model $M_\rho$, target model $M_\pi$, window size $k$, number of steps $N_{\mathrm{step}}$, resampling budget $n_{\mathrm{rm}}$
\Ensure Final generated sequence $\hat X_0$
\State Initialize $X\leftarrow X_t$.
\For{$s=0$ to $N_{\mathrm{step}}-1$}
    \State Initialize $\mathcal{C}^{(s)}\leftarrow \emptyset$.
    \While{there exists an unresolved position in $J_{\mathrm{pred}}$}
        \State Let $J_t=\{j_1,\dots,j_{L_t}\}\subseteq J_{\mathrm{pred}}$, with $j_1<\cdots<j_{L_t}$, be the unresolved positions.
        \State Set $b\leftarrow \min\{k,L_t\}$.
        \State Compute draft laws $\rho_\phi^{j_m}(\cdot)=\rho_\phi(X_0^{j_m}=\cdot\mid X,t)$ for $m=1,\dots,b$.
        \State Sample draft tokens $\tilde X_0^{j_m}\sim \rho_\phi^{j_m}(\cdot)$ for $m=1,\dots,b$.
        \State Construct verification inputs $X^{(j_m)}$ for $m=1,\dots,b$,
        \State \hspace{\algorithmicindent}each input uses the verified prefix and the draft prefix
        $\tilde X_0^{j_1},\dots,\tilde X_0^{j_{m-1}}$.
        \State Compute target laws $\pi_\theta^{j_m}(\cdot)=p_\theta(X_0^{j_m}=\cdot\mid X^{(j_m)},t)$ for $m=1,\dots,b$.
        \For{$m=1$ to $b$}
            \State Sample $u_m\sim\mathrm{Uniform}(0,1)$.
            \If{$u_m\le \min\{1,\pi_\theta^{j_m}(\tilde X_0^{j_m})/\rho_\phi^{j_m}(\tilde X_0^{j_m})\}$}
                \State $X^{j_m}\leftarrow \tilde X_0^{j_m}$,\quad $c_{j_m}\leftarrow \pi_\theta^{j_m}(\tilde X_0^{j_m})$.
                \State $\mathcal{C}^{(s)}\leftarrow \mathcal{C}^{(s)}\cup\{(j_m,c_{j_m})\}$.
            \Else
                \State Sample $Z^{j_m}\sim \pi_\theta^{\prime j_m}(\cdot)$ according to Eq.~\ref{eq:residual_distribution_alg}.
                \State $X^{j_m}\leftarrow Z^{j_m}$,\quad $c_{j_m}\leftarrow \pi_\theta^{j_m}(Z^{j_m})$.
                \State Discard the remaining draft tokens and restart from the updated $X$.
                \State $\mathcal{C}^{(s)}\leftarrow \mathcal{C}^{(s)}\cup\{(j_m,c_{j_m})\}$.
                \State \textbf{break}
            \EndIf
        \EndFor
    \EndWhile
    \State $\hat X_0^{(s)}\leftarrow X$.
    \If{$s<N_{\mathrm{step}}-1$}
        \State Select $R^{(s+1)}=\operatorname{LowConf}_{n_{\mathrm{rm}}}(\mathcal{C}^{(s)})$.
        \For{each $i\in R^{(s+1)}$}
            \State Re-corrupt $X^i\sim q(X_t^i\mid \hat X_0^{(s),i})$.
        \EndFor
    \EndIf
\EndFor
\State \Return $\hat X_0=\hat X_0^{(N_{\mathrm{step}}-1)}$
\end{algorithmic}
\end{algorithm}
Algorithm~\ref{alg:fef_dllm_full_inference} describes the full-sequence inference procedure of FeF-DLLM. At each generation step, the method repeatedly
applies speculative decoding over the unresolved prediction positions. The draft
model $M_\rho$ proposes up to $k$ candidate clean tokens in the current
speculative window, and the target model $M_\pi$ verifies these candidates in a
fixed left-to-right order using position-conditioned inputs. Accepted or
corrected tokens are written back to the sequence and serve as clean prefix
context for subsequent positions. If all candidates in the window are accepted,
the whole window is committed; if one candidate is rejected, a corrected token is
sampled from the residual distribution, the remaining draft tokens in the window
are discarded, and a new speculative window starts from the updated sequence.
After all prediction positions are resolved, the method selects low-confidence
positions and re-corrupts them through the D3PM forward kernel for the next
generation step. This procedure preserves the accept--reject correction of
speculative decoding while allowing low-confidence positions to be refined
through repeated generation and re-corruption steps.

\newpage

\section{Theoretical Proofs}
\subsection{Proof of Lemma \ref{lemma:predict}}
\begin{proof}
For simplicity, we omit the timestep index in the probability notation. By the
chain rule and the position-wise corruption assumption, we have
\[
\begin{aligned}
p(X_0^i\mid X_t,X_0^{<i})
&=
\frac{
p(X_0^i\mid X_t)p(X_0^{<i}\mid X_t,X_0^i)
}{
p(X_0^{<i}\mid X_t)
} \\
&=
\frac{
p(X_0^i\mid X_t)p(X_t\mid X_0^{<i},X_0^i)p(X_0^{<i}\mid X_0^i)
}{
p(X_0^{<i}\mid X_t)p(X_t\mid X_0^i)
} \\
&=
\frac{
p(X_0^i\mid X_t)
p(X_t^{<i}\mid X_0^{<i})
p(X_t^i\mid X_0^i)
p(X_t^{>i}\mid X_0^i,X_0^{<i})
p(X_0^{<i}\mid X_0^i)
}{
p(X_0^{<i}\mid X_t)
p(X_t^i\mid X_0^i)
p(X_t^{-i}\mid X_0^i)
} \\
&=
\frac{
p(X_0^i\mid X_t)
p(X_t^{<i}\mid X_0^{<i})
p(X_t^{>i}\mid X_0^i,X_0^{<i})
p(X_0^{<i}\mid X_0^i)
}{
p(X_0^{<i}\mid X_t)
p(X_t^{-i}\mid X_0^i)
} \\
&\propto
\frac{
p(X_0^i\mid X_t)
p(X_t^{>i}\mid X_0^i,X_0^{<i})
p(X_0^{<i}\mid X_0^i)
}{
p(X_t^{-i}\mid X_0^i)
} \\
&\propto
p(X_0^i\mid X_t)
\frac{
p(X_0^i\mid X_t^{>i},X_0^{<i})
}{
p(X_0^i\mid X_t^{-i})
}.
\end{aligned}
\]
Moreover, since the forward corruption process factorizes across positions,
$X_t^{<i}$ is generated only from $X_0^{<i}$. Hence, once $X_0^{<i}$ is
conditioned on, $X_t^{<i}$ provides no additional information about $X_0^i$,
which gives
\[
p(X_0^i\mid X_t,X_0^{<i})
=
p(X_0^i\mid X_t^{\ge i},X_0^{<i}).
\]
The first equality follows from the chain rule of conditional probabilities. The
second equality expands $p(X_0^{<i}\mid X_t,X_0^i)$ by Bayes' rule. The third
equality decomposes the corrupted sequence into prefix, current position, and
suffix under the position-wise forward corruption process, with the suffix term
understood as the marginal likelihood after integrating out $X_0^{>i}$. The
fourth equality cancels the common factor $p(X_t^i\mid X_0^i)$. The first
proportionality absorbs all terms independent of $X_0^i$ into the normalization
constant. The last proportionality rewrites the remaining likelihood ratio in
terms of $p(X_0^i\mid X_t^{>i},X_0^{<i})$ and
$p(X_0^i\mid X_t^{-i})$. The proportionality is over $X_0^i$, with
normalization over the vocabulary.
\end{proof}

\subsection{Proof of Theorem \ref{theorem:distribution}}

\begin{proof}
Fix the corrupted state $X_t$ and the ordered update set
\[
    J_t=\{j_1,\dots,j_{L_t}\},
    \qquad
    j_1<\cdots<j_{L_t}.
\]
For each $j_m\in J_t$, the target model is instantiated as the oracle
next-position law
\[
    \pi_\theta^{j_m}(x)
    =
    p^\star
    \left(
        X_0^{j_m}=x
        \mid
        X_t^{\ge j_m},
        X_0^{<j_m},
        t
    \right),
    \qquad m=1,\dots,L_t .
\]
Let $\rho_\phi^{j_m}$ denote the corresponding draft law.

We first verify the one-step correction. Condition on the already committed
prefix before position $j_m$. Then both $\pi_\theta^{j_m}$ and
$\rho_\phi^{j_m}$ are fixed categorical distributions on $\mathcal{V}$.
A proposal $\tilde X_0^{j_m}\sim \rho_\phi^{j_m}$ is accepted with probability
\[
    a(\tilde X_0^{j_m})
    =
    \min
    \left\{
        1,
        \frac{\pi_\theta^{j_m}(\tilde X_0^{j_m})}
        {\rho_\phi^{j_m}(\tilde X_0^{j_m})}
    \right\}.
\]
We use the convention
\[
    \rho_\phi^{j_m}(x)
    \min\left\{1,\frac{\pi_\theta^{j_m}(x)}
    {\rho_\phi^{j_m}(x)}\right\}
    =
    \min\{\pi_\theta^{j_m}(x),\rho_\phi^{j_m}(x)\},
\]
which also covers the case $\rho_\phi^{j_m}(x)=0$. If the proposal is rejected,
the corrected token is sampled from
\[
    \pi_\theta^{\prime j_m}(x)
    =
    \frac{
        \left[\pi_\theta^{j_m}(x)-\rho_\phi^{j_m}(x)\right]_+
    }{
        1-\sum_{z\in\mathcal{V}}
        \min\{\pi_\theta^{j_m}(z),\rho_\phi^{j_m}(z)\}
    },
\]
where $[u]_+=\max\{u,0\}$. If the denominator is zero, rejection occurs with
probability zero, and the residual law is never used.

For any $x\in\mathcal{V}$, the probability that $x$ is committed at position
$j_m$ is
\[
\begin{aligned}
    \Pr(\hat X_0^{j_m}=x)
    &=
    \rho_\phi^{j_m}(x)
    \min
    \left\{
        1,
        \frac{\pi_\theta^{j_m}(x)}
        {\rho_\phi^{j_m}(x)}
    \right\}
    +
    \left(
        1-\sum_{z\in\mathcal{V}}
        \min\{\pi_\theta^{j_m}(z),\rho_\phi^{j_m}(z)\}
    \right)
    \pi_\theta^{\prime j_m}(x)
    \\
    &=
    \min\{\pi_\theta^{j_m}(x),\rho_\phi^{j_m}(x)\}
    +
    \pi_\theta^{j_m}(x)
    -
    \min\{\pi_\theta^{j_m}(x),\rho_\phi^{j_m}(x)\}
    \\
    &=
    \pi_\theta^{j_m}(x).
\end{aligned}
\]
Thus, given the committed prefix, the token committed by the speculative
accept--reject step has law $\pi_\theta^{j_m}$.

Applying this argument sequentially along
$j_1<\cdots<j_{L_t}$, for any
$x_{J_t}=(x^{j_1},\dots,x^{j_{L_t}})\in\mathcal{V}^{L_t}$,
\[
\begin{aligned}
&
\Pr
\left(
    \hat X_{0,J_t}
    =
    x_{J_t}
    \mid
    X_t,t
\right)
\\
&\quad =
\prod_{m=1}^{L_t}
\Pr
\left(
    \hat X_0^{j_m}=x^{j_m}
    \mid
    X_t^{\ge j_m},
    x_0^{<j_m},
    t
\right)
\\
&\quad =
\prod_{m=1}^{L_t}
\pi_\theta^{j_m}(x^{j_m})
\\
&\quad =
\prod_{m=1}^{L_t}
p^\star
\left(
    X_0^{j_m}=x^{j_m}
    \mid
    X_t^{\ge j_m},
    x_0^{<j_m},
    t
\right)
\\
&\quad =
p^\star(x_{J_t}\mid X_t,t).
\end{aligned}
\]
The last equality follows from the left-to-right factorization of the oracle
joint law over the ordered set $J_t$. Hence the generated sequence has the
desired oracle joint law.
\end{proof}

\subsection{Proof of Corollary \ref{corollary:resample}}

\begin{proof}
Fix a resampling pass $s\ge 1$. Let
\[
    R^{(s)}
    =
    \{r_1^{(s)},\dots,r_{L_s}^{(s)}\},
    \qquad
    r_1^{(s)}<\cdots<r_{L_s}^{(s)},
\]
be the positions selected for resampling. We condition on the previous sequence
$\hat X_0^{(s-1)}$ and on the selected set $R^{(s)}$. Under this conditioning,
all positions outside $R^{(s)}$ are fixed, and the resampling pass only updates
positions in $R^{(s)}$.

By the assumption of the corollary, the target model on $R^{(s)}$ is given by
the oracle next-position law. Therefore, the same one-step accept--reject
argument used in the proof of Theorem~1 implies that each committed token has
the corresponding oracle law given the already committed prefix and the fixed
outside positions. Applying the chain rule along the order
\[
    r_1^{(s)}<\cdots<r_{L_s}^{(s)},
\]
we obtain, for any
$x_{R^{(s)}}\in\mathcal{V}^{L_s}$,
\[
\begin{aligned}
&
\Pr
\left(
    \hat X_{0,R^{(s)}}^{(s)}
    =
    x_{R^{(s)}}
    \mid
    \hat X_0^{(s-1)},
    R^{(s)},
    t
\right)
\\
&\quad =
p^\star
\left(
    x_{R^{(s)}}
    \mid
    \hat X_0^{(s-1)},
    R^{(s)},
    t
\right).
\end{aligned}
\]
Thus, conditional on the resampling history and the selected resampling set,
the regeneration step has the oracle joint law on the updated positions.
\end{proof}

\subsection{Proof of Theorem \ref{theorem:committed length}}

\begin{proof}
In one speculative round, candidates are verified from left to right until the
first rejection or until all $k$ candidates are accepted. Let $C_k$ denote the
number of positions committed in this round. Since a rejected position is also
committed after correction, the event $C_k>\ell$ is equivalent to accepting the
first $\ell$ draft tokens.

Under the independent-acceptance approximation, each draft token is accepted
with probability $\alpha$. Hence, for $\ell=0,\dots,k-1$,
\[
    \Pr(C_k>\ell)=\alpha^\ell .
\]
By the tail-sum formula for nonnegative integer-valued random variables,
\[
\begin{aligned}
    \mathbb{E}[C_k]
    &=
    \sum_{\ell=0}^{k-1}\Pr(C_k>\ell)
    \\
    &=
    \sum_{\ell=0}^{k-1}\alpha^\ell .
\end{aligned}
\]
Evaluating the geometric series gives
\[
    \mathbb{E}[C_k]
    =
    \begin{cases}
        \dfrac{1-\alpha^{k}}{1-\alpha}, & 0 \le \alpha < 1, \\[8pt]
        k, & \alpha = 1.
    \end{cases}
\]
\end{proof}

\subsection{Proof of Corollary \ref{corollary:acceleration}}

\begin{proof}
A speculative round consists of one draft proposal pass and one target
verification pass, with wall-clock costs $c_\rho$ and $c_\pi$, respectively.
Thus, its expected cost is $c_\rho+c_\pi$, and by Theorem~2 it commits
$\mathbb{E}[C_k]$ positions in expectation.

The prefix-conditioned sequential target baseline commits one position per
target pass, with cost $c_\pi$. Therefore, its expected cost for committing
$\mathbb{E}[C_k]$ positions is $\mathbb{E}[C_k]c_\pi$. The idealized
acceleration ratio is consequently
\[
    S
    =
    \frac{\mathbb{E}[C_k]\,c_\pi}{c_\rho+c_\pi}.
\]
if $c_\rho \approx c_\pi$, then
\[
    S
    =
    \frac{\mathbb{E}[C_k]}{2}.
\]
Substituting the expression for $\mathbb{E}[C_k]$ from Theorem~2 yields
\[
    S
    =
    \begin{cases}
        \dfrac{1-\alpha^{k}}{2(1-\alpha)}, & 0 \le \alpha < 1, \\[8pt]
        \dfrac{k}{2}, & \alpha = 1.
    \end{cases}
\]
\end{proof}


\end{document}